\newtheorem{define}{Definition}
\newtheorem{lem}{Lemma}
\title{Counterfactual Fairness}
\newcommand*\samethanks[1][\value{footnote}]{\footnotemark[#1]}
\definecolor{dkgreen}{rgb}{0,0.6,0}
\definecolor{gray}{rgb}{0.5,0.5,0.5}
\definecolor{mauve}{rgb}{0.58,0,0.82}
\tiny\color{gray},
\author{
  Matt Kusner \thanks{Equal contribution. This work was done while JL was a Research Fellow at the Alan Turing Institute.}\\
  The Alan Turing Institute and \\
  University of Warwick \\
  \texttt{mkusner@turing.ac.uk} \\
  \And
  Joshua Loftus \samethanks\\
  New York University \\
  \texttt{loftus@nyu.edu} \\
  \And
  Chris Russell \samethanks \\
  The Alan Turing Institute and \\
  University of Surrey\\
  \texttt{crussell@turing.ac.uk} \\
  \And
  Ricardo Silva \\
  The Alan Turing Institute and \\
  University College London\\
  \texttt{ricardo@stats.ucl.ac.uk} \\
}
\begin{document}
\maketitle




\begin{abstract} 
  Machine learning can impact people with legal or ethical
  consequences when it is used
  to automate decisions in areas such as insurance, lending, hiring,
  and predictive policing.  In many of these scenarios,
  previous decisions have been made that are unfairly biased against
  certain subpopulations, for example those of a particular race, gender, or
  sexual orientation.  Since this past data may be biased,
  machine learning predictors must account for this to avoid
  perpetuating or creating discriminatory practices.
  In this paper, we develop a framework for modeling fairness
  using tools from causal inference. Our definition of
  \emph{counterfactual fairness} captures the
  intuition that a decision is fair towards an individual if it is
  the same in (a) the actual world and (b) a counterfactual world
  where the individual belonged to a different demographic
  group. We demonstrate our framework on a real-world problem of fair
  prediction of success in law school.
\end{abstract} 

\section{Contribution}
\label{sec:introduction}
Machine learning has spread to fields as diverse as credit scoring
\cite{khandani2010consumer}, crime prediction
\cite{brennan2009evaluating}, and loan assessment
\cite{mahoney2007method}. Decisions in these areas may have ethical or
legal implications, so it is necessary for the modeler to think beyond
the objective of maximizing prediction accuracy and consider the
societal impact of their work.
For many of these applications, it is crucial to ask if the
predictions of a model are \emph{fair}.  Training data can contain
unfairness for reasons having to do with historical prejudices or
other factors outside an individual's control.
In 2016, the Obama administration released a
report\footnote{https://obamawhitehouse.archives.gov/blog/2016/05/04/big-risks-big-opportunities-intersection-big-data-and-civil-rights}
which urged data scientists to analyze ``how technologies can
deliberately or inadvertently perpetuate, exacerbate, or mask
discrimination."

There has been much recent interest in designing algorithms that make
fair predictions
\cite{hardt2016equality,dwork2012fairness,joseph2016rawlsian,kamishima2011fairness,zliobaite2015survey,zafar2016fairness,zafar2015learning,grgiccase,kleinberg:17,calders2010three,kamiran2012data,bolukbasi2016man,kamiran2009classifying,zemel2013learning,louizos2015variational}.
In large part, the literature has focused on formalizing fairness into
quantitative definitions and using them to solve a discrimination
problem in a certain dataset. Unfortunately, for a practitioner,
law-maker, judge, or anyone else who is interested in implementing
algorithms that control for discrimination, it can be difficult to
decide {\em which} definition of fairness to choose for the task at
hand. Indeed, we demonstrate that depending on the relationship
between a protected attribute and the data, certain definitions of
fairness can actually \emph{increase discrimination}.

%

In this paper, we introduce the first explicitly causal approach to
address fairness.  Specifically, we leverage the causal framework of
\citet{pearl2009causal} to model the relationship between protected
attributes and data. We describe how techniques from causal inference
can be effective tools for designing fair algorithms and argue, as in
\citet{dedeo2014wrong}, that it is essential to properly address
causality in fairness. In perhaps the most closely related prior work,
\citet{johnson2016impartial} make similar arguments but from a
non-causal perspective. An alternative use of causal modeling in
the context of fairness is introduced independently by \citep{kilbertus:17}.

In Section \ref{sec:background}, we provide a summary of basic
concepts in fairness and causal modeling. In Section
\ref{sec:count_fair}, we provide the formal definition of
\emph{counterfactual fairness}, which enforces that a distribution
over possible predictions for an individual should remain unchanged in
a world where an individual's protected attributes had been different
in a causal sense. In Section \ref{sec:methods}, we describe an
algorithm to implement this definition, while distinguishing it from
existing approaches.  In Section \ref{sec:experiments}, we illustrate
the algorithm with a case of fair assessment of law school success.

\section{Background}
\label{sec:background}


This section provides a basic account of two separate areas of
research in machine learning, which are formally unified in this
paper. We suggest \citet{berk:17} and \citet{pearl:16} as references.
Throughout this paper, we will use the following notation.  Let $A$
denote the set of {\it protected attributes} of an individual,
variables that must not be discriminated against in a formal sense
defined differently by each notion of fairness discussed. The decision
of whether an attribute is protected or not is taken as a primitive in
any given problem, regardless of the definition of fairness
adopted. Moreover, let $X$ denote the other observable attributes of
any particular individual, $U$ the set of relevant latent attributes
which are not observed, and let $Y$ denote the outcome to be
predicted, which itself might be contaminated with historical
biases. Finally, $\hat Y$ is the {\it predictor}, a random variable
that depends on $A, X$ and $U$, and which is produced by a machine
learning algorithm as a prediction of $Y$.

\subsection{Fairness}

There has been much recent work on fair algorithms.  These include
fairness through unawareness \cite{grgiccase}, individual fairness
\cite{dwork2012fairness,zemel2013learning,louizos2015variational,
  joseph2016rawlsian}, demographic parity/disparate impact
\cite{zafar2015learning}, and equality of opportunity
\cite{hardt2016equality,zafar2016fairness}.  For simplicity we often
assume $A$ is encoded as a binary attribute, but this can be
generalized.


\begin{define}[Fairness Through Unawareness (FTU)]
  An algorithm is fair so long as any protected attributes $A$ are not
  explicitly used in the decision-making process. 
\end{define}
Any mapping $\hat{Y}: X \rightarrow Y$ that excludes $A$ satisfies
this. Initially proposed as a baseline, the approach has found
favor recently with more general approaches such as \citet{grgiccase}.
Despite its compelling simplicity, FTU has a clear
shortcoming as elements of $X$ can contain discriminatory information
analogous to $A$ that may not be obvious at first. The need for expert
knowledge in assessing the relationship between $A$ and $X$ was
highlighted in the work on individual fairness:
%
\begin{define}[Individual Fairness (IF)]
  An algorithm is fair if it gives similar predictions to similar
  individuals. Formally, given a metric $d(\cdot,\cdot)$, if individuals $i$ and $j$ are similar under this metric (i.e., $d(i,j)$ is small) then their predictions should be similar: $\hat{Y}(X^{(i)}, A^{(i)}) \approx \hat{Y}(X^{(j)}, A^{(j)})$.
\end{define}
As described in \cite{dwork2012fairness}, the metric $d(\cdot,\cdot)$ must be carefully chosen, requiring an understanding of the domain at
hand beyond black-box statistical modeling. This can also be
contrasted against population level criteria such as
%
\begin{define}[Demographic Parity (DP)] 
A predictor  $\hat{Y}$ satisfies demographic parity if
$P(\hat{Y} | A = 0) = P(\hat{Y} | A = 1)$. 
\end{define}
 %
%
\begin{define}[Equality of Opportunity (EO)]
 A predictor $\hat{Y}$ satisfies equality of opportunity if
$P(\hat{Y}=1 | A=0,Y=1) = P(\hat{Y}=1 | A=1,Y=1)$. 
\end{define}
These criteria can be incompatible in general, as discussed in
\cite{kleinberg:17, berk:17, chouldechova:17}.  Following the
motivation of IF and \cite{johnson2016impartial}, we propose that knowledge
about relationships between all
attributes should be taken into consideration, even if strong
assumptions are necessary. Moreover, it is not immediately clear for
any of these approaches in which ways historical biases can be
tackled. We approach such issues from an explicit causal modeling
perspective.

\subsection{Causal Models and Counterfactuals}
\label{subsec:cmc}
We follow 
\citet{pearl:00}, and define a causal
model as a triple $(U, V, F)$ of sets such that
\begin{itemize}
\item $U$ is a set of latent {\bf background} variables,
  which are factors not caused by any variable in the set $V$ of {\bf observable} variables;
\item $F$ is a set of functions $\{f_1, \dots, f_n\}$, one for each $V_i \in V$, such
that $V_i = f_i(pa_i, U_{pa_i})$, $pa_i \subseteq V \backslash
\{V_i\}$ and $U_{pa_i} \subseteq U$. Such equations are also known as
{\bf structural equations} \citep{bol:89}.
\end{itemize}
%


The notation ``$pa_i$'' refers to the ``parents'' of $V_i$ and is
motivated by the assumption that the model factorizes as a directed
graph, here assumed to be a directed acyclic graph (DAG).  The model
is causal in that, given a distribution $P(U)$ over the background
variables $U$, we can derive the distribution of a subset $Z \subseteq
V$ following an {\bf intervention} on $V\setminus Z$.  An 
  intervention on variable $V_i$ is the substitution of equation $V_i
= f_i(pa_i, U_{pa_i})$ with the equation $V_i = v$ for some $v$. This
captures the idea of an agent, external to the system, modifying it by
forcefully assigning value $v$ to $V_i$,
for example as in a randomized experiment.

The specification of $F$ is a strong assumption but allows for the
calculation of {\bf counterfactual} quantities.  In brief, consider
the following counterfactual statement, ``the value of $Y$ if $Z$ had
taken value $z$'', for two observable variables $Z$ and $Y$. By
assumption, the state of any observable variable is fully determined
by the background variables and structural equations. The
counterfactual is modeled as the solution for $Y$ for a given $U = u$
where the equations for $Z$ are replaced with $Z \!=\!  z$.  We denote
it by $Y_{Z \leftarrow z}(u)$ \cite{pearl:00}, and sometimes as $Y_z$
if the context of the notation is clear.

Counterfactual inference, as specified by a causal model $(U, V, F)$
given evidence $W$, is the computation of probabilities $P(Y_{Z
  \leftarrow z}(U)\ |\ W \!=\! w)$, where $W$, $Z$ and $Y$ are subsets
of $V$. Inference proceeds in three steps, as explained in more detail
in Chapter 4 of \citet{pearl:16}: 1. {\bf Abduction}: for a given
prior on $U$, compute the posterior distribution of $U$ given the
evidence $W = w$; 2. {\bf Action}: substitute the equations for $Z$
with the interventional values $z$, resulting in the modified set of
equations $F_z$; 3. {\bf Prediction}: compute the implied distribution
on the remaining elements of $V$ using $F_z$ and the posterior $P(U\ |
W = w)$.





\section{Counterfactual Fairness}
\label{sec:count_fair}
Given a predictive problem with fairness considerations, where $A$, $X$ and $Y$
represent the protected attributes, remaining attributes, and output of interest respectively,
let us assume that we are given a causal model $(U, V, F)$, where $V \equiv A \cup X$.
We postulate the following criterion for predictors of $Y$.
\begin{define}[Counterfactual fairness]
Predictor $\hat Y$ is {\bf counterfactually fair}
if under any context $X = x$ and $A = a$,
  \label{eq:cf_definition}
\begin{align}
  P(\hat Y_{A \leftarrow a\ }(U) = y\ |\ X = x, A = a)  =
  P(\hat Y_{A \leftarrow a'}(U) = y\ |\ X = x, A = a), 
\end{align}
for all $y$ and for any value $a'$ attainable by $A$.
\end{define}

This notion is closely related to {\bf actual causes}
\cite{halpern:16}, or token causality in the sense that, to be fair,
$A$ should not be a cause of $\hat Y$ in any individual instance. In
other words, changing $A$ while holding things which are not causally
dependent on $A$ constant will not change the distribution of $\hat
Y$.
We also emphasize that
counterfactual fairness is an individual-level definition. This is
substantially different from comparing different individuals that happen to
share the same ``treatment'' $A = a$ and coincide on the values of
$X$, as discussed in Section 4.3.1 of \citep{pearl:16} and the
Supplementary Material. Differences between $X_a$ and $X_{a'}$ must be caused
by variations on $A$ only. Notice also that this definition is
agnostic with respect to how good a predictor $\hat Y$ is, which we
discuss in Section \ref{sec:methods}.

\noindent {\bf Relation to individual fairness}. IF is agnostic with
respect to its notion of similarity metric, which is both a strength
(generality) and a weakness (no unified way of defining similarity).
Counterfactuals and similarities are related, as in the classical
notion of distances between ``worlds'' corresponding to different
counterfactuals \cite{lewis:73}. If $\hat Y$ is a
deterministic function of $W \subset A \cup X \cup U$, as in several
of our examples to follow, then IF can be defined by treating equally two
individuals with the same $W$ in a way that is also counterfactually fair.

\noindent {\bf Relation to \citet{pearl:16}}.  In
Example 4.4.4 of \cite{pearl:16}, the authors condition instead on
$X$, $A$, and the observed realization of $\hat Y$, and calculate the
probability of the counterfactual realization $\hat Y_{A \leftarrow
  a'}$ differing from the factual.
This example conflates the predictor $\hat Y$ with the outcome $Y$, of
which we remain agnostic in our definition but which is used in the
construction of $\hat Y$ as in Section \ref{sec:methods}. Our framing
makes the connection to machine learning more explicit.

\begin{figure}
\centerline{\includegraphics[width=\textwidth]{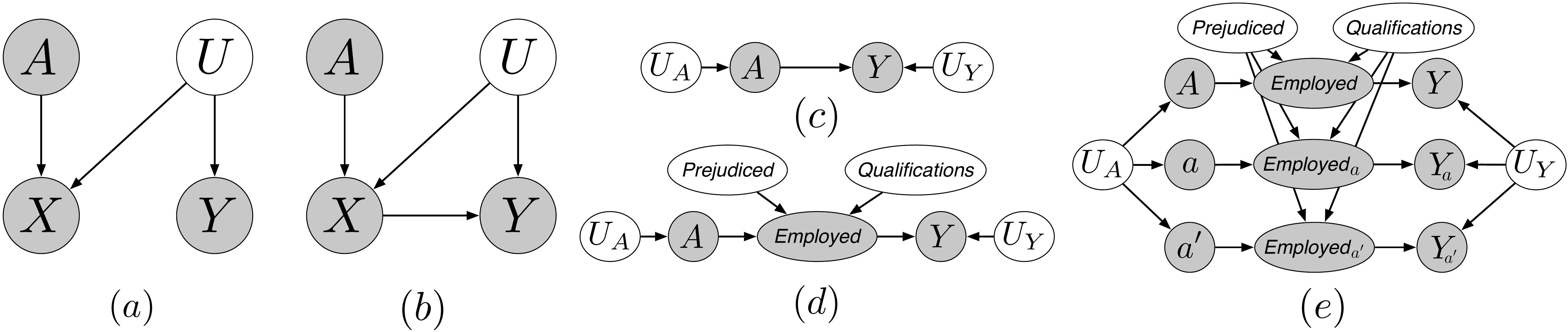}}
\caption{\label{fig:ex1}
    (a), (b) Two causal models for different
    real-world fair prediction scenarios.\label{figure.simple_models}
    See Section \ref{sec:further_examples} for discussion.
    (c) The graph corresponding to
    a causal model with $A$ being the protected attribute and $Y$ some
    outcome of interest, with background variables assumed to be
    independent. (d) Expanding the model to include an intermediate
    variable indicating whether the individual is employed with two
    (latent) background variables $\textbf{Prejudiced}$ (if the person
    offering the job is prejudiced) and $\textbf{Qualifications}$ (a
    measure of the individual's qualifications). (e) A twin network
    representation of this system \citep{pearl:00} under two different
    counterfactual levels for $A$. This is created by copying nodes
    descending from $A$, which inherit unaffected parents from the
    factual world.}
\end{figure}

\subsection{Examples}
\label{sec:further_examples}

To provide an intuition for counterfactual fairness, we will consider
two real-world fair prediction scenarios: \textbf{insurance pricing}
and \textbf{crime prediction}. Each of these correspond to one of the
two causal graphs in Figure~\ref{figure.simple_models}(a),(b). The
Supplementary Material provides a more mathematical discussion of
these examples with more detailed insights.

\paragraph{Scenario 1: The Red Car.}
A car insurance company wishes to price insurance for car owners by
predicting their accident rate $Y$. They assume there is an unobserved
factor corresponding to aggressive driving $U$, that (a) causes
drivers to be more likely have an accident, and (b) causes individuals
to prefer red cars (the observed variable $X$). Moreover, individuals
belonging to a certain race $A$ are more likely to drive red
cars. However, these individuals are no more likely to be aggressive
or to get in accidents than any one else. We show this in
Figure~\ref{figure.simple_models}(a). Thus, using the red car feature
$X$ to predict accident rate $Y$ would seem to be an unfair prediction
because it may charge individuals of a certain race more than others,
even though no race is more likely to have an accident. Counterfactual
fairness agrees with this notion: changing $A$ while holding $U$ fixed
will also change $X$ and, consequently, $\hat Y$. Interestingly, we
can show (Supplementary Material) that in a linear model, regressing
$Y$ on $A$ and $X$ is equivalent to regressing on $U$, so
off-the-shelf regression here is counterfactually fair. Regressing $Y$
on $X$ alone obeys the FTU criterion but is not counterfactually fair,
so {\em omitting $A$ (FTU) may introduce unfairness into an otherwise
  fair world.}
\paragraph{Scenario 2: High Crime Regions.}
A city government wants to estimate crime rates by neighborhood to
allocate policing resources. Its analyst constructed training data by
merging (1) a registry of residents containing their neighborhood $X$
and race $A$, with (2) police records of arrests, giving each resident
a binary label with $Y = 1$ indicating a criminal arrest record.  Due
to historically segregated housing, the location $X$ depends on $A$.
Locations $X$ with more police resources have larger numbers of
arrests $Y$.  And finally, $U$ represents the totality of
socioeconomic factors and policing practices that both influence where
an individual may live and how likely they are to be arrested and
charged.  This can all be seen in
Figure~\ref{figure.simple_models}(b).

In this example, higher observed arrest rates in some neighborhoods
are due to greater policing there, not because people of different
races are any more or less likely to break the law.  The label $Y = 0$
does not mean someone has never committed a crime, but rather that
they have not been caught.  {\em If individuals in the training data
  have not already had equal opportunity, algorithms enforcing EO will
  not remedy such unfairness}.  In contrast, a counterfactually fair
approach would model differential enforcement rates using $U$ and base
predictions on this information rather than on $X$ directly.


In general, we need a multistage procedure in which we first derive
latent variables $U$, and then based on them we minimize some loss
with respect to $Y$. This is the core of the algorithm discussed next.

\subsection{Implications}
One simple but important implication of the definition of counterfactual fairness is the following:
\begin{lem}
  \label{lem:nondescend}
  Let $\mathcal G$ be the causal graph of the given model $(U, V, F)$.
  Then $\hat Y$ will be counterfactually fair if it is a function
  of the non-descendants of $A$.
\end{lem}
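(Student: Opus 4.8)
The plan is to show that the counterfactual distribution of $\hat{Y}$ under intervention $A \leftarrow a'$ coincides with its factual distribution, by arguing that none of the variables feeding into $\hat{Y}$ are affected by the intervention. Let me trace through the three-step counterfactual inference procedure (abduction, action, prediction) described in Section~\ref{subsec:cmc}.

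The key observation is about which variables change under an intervention on $A$. First I would recall that, by the causal model, intervening on $A$ can only alter variables that are descendants of $A$ in the graph $\mathcal{G}$: the structural equation for any variable is a function of its parents, so by induction along directed paths, a variable whose value differs between the factual world and the world with $A \leftarrow a'$ must be reachable from $A$ via a directed path. The background variables $U$ are non-descendants by definition (they are exogenous), so the abduction step — computing $P(U \mid X = x, A = a)$ — is identical in both the factual and counterfactual computations; the posterior over $U$ does not depend on the value we later assign to $A$. Then for any observable non-descendant $W$ of $A$, its value $W_{A \leftarrow a'}(u)$ equals its factual value $W(u)$, since the intervention propagates only to descendants.

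With these pieces in place, the argument is short. Suppose $\hat{Y} = g(W_1, \dots, W_k, U')$ where each $W_j$ is a non-descendant observable and $U' \subseteq U$ (the hypothesis of the lemma, allowing $\hat Y$ to depend on the latent background variables as well, which are themselves non-descendants). Under the intervention $A \leftarrow a'$, every argument of $g$ retains its factual value for each fixed $u$ in the support of the abduction posterior, hence $\hat{Y}_{A \leftarrow a'}(u) = \hat{Y}_{A \leftarrow a}(u)$ pointwise. Integrating against the common posterior $P(U \mid X=x, A=a)$ — which, as noted, is the same in both the factual evaluation (where $a' = a$) and the counterfactual one — gives the required equality of distributions for every $y$. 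The main obstacle, and the step deserving the most care, is justifying that the intervention leaves non-descendants invariant while the abduction posterior is simultaneously unchanged: one must be explicit that the conditioning event $X = x, A = a$ used in abduction is the factual evidence in both sides of the definition, so that it is only the action and prediction steps that differ, and those differences are confined to descendants of $A$ that never enter $g$.
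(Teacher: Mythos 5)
Your proof is correct and takes essentially the same route as the paper's: the paper likewise argues that any non-descendant $W$ of $A$ satisfies $W_{A \leftarrow a}(U) \stackrel{d}{=} W_{A \leftarrow a'}(U)$ by the three inferential steps of Section~\ref{subsec:cmc}, so any function $\hat Y$ of non-descendants is invariant to the counterfactual value of $A$. Your write-up simply makes explicit what the paper leaves implicit --- the induction along directed paths showing interventions only propagate to descendants, and the fact that the abduction posterior $P(U \mid X = x, A = a)$ is common to both sides --- thereby even establishing the stronger pointwise equality $\hat Y_{A \leftarrow a}(u) = \hat Y_{A \leftarrow a'}(u)$.
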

\begin{proof}
 Let $W$ be any non-descendant of $A$ in $\mathcal G$. Then $W_{A
   \leftarrow a}(U)$ and $W_{A \leftarrow a'}(U)$ have the same
 distribution by the three inferential steps in Section
 \ref{subsec:cmc}.  Hence, the distribution of any function $\hat Y$ of the non-descendants of $A$
 is invariant with respect to the counterfactual values of $A$.
\end{proof}

This does not exclude using a descendant $W$ of $A$ as a possible
input to $\hat Y$. However, this will only be possible in the case
where the overall dependence of $\hat Y$ on $A$ disappears, which will
not happen in general. Hence, Lemma~\ref{lem:nondescend} provides the
most straightforward way to achieve counterfactual fairness. In some
scenarios, it is desirable to define path-specific variations of
counterfactual fairness that allow for the inclusion of some descendants
of $A$, as discussed by \cite{nabi:17,kilbertus:17} and the
Supplementary Material.

\noindent{\bf Ancestral closure of protected attributes.} Suppose that
a parent of a member of $A$ is not in $A$.  Counterfactual fairness
allows for the use of it in the definition of $\hat Y$. If this seems
counterintuitive, then we argue that the fault should be at the
postulated set of protected attributes rather than with the definition
of counterfactual fairness, and that typically we should expect set
$A$ to be closed under ancestral relationships given by the causal
graph. For instance, if {\it Race} is a protected attribute, and {\it
  Mother's race} is a parent of {\it Race}, then it should also be in
$A$.

\noindent{\bf Dealing with historical biases and an existing
  fairness paradox.} The explicit difference between $\hat Y$ and $Y$
allows us to tackle historical biases. For instance, let $Y$ be an
indicator of whether a client defaults on a loan, while $\hat Y$ is
the actual decision of giving the loan. Consider the DAG $A
\rightarrow Y$, shown in Figure \ref{fig:ex1}(c) with the explicit
inclusion of set $U$ of independent background variables. $Y$ is the
objectively ideal measure for decision making, the binary indicator of
the event that the individual defaults on a loan. If $A$ is postulated
to be a protected attribute, then the predictor $\hat Y = Y = f_Y(A,
U)$ is not counterfactually fair, with the arrow $A \rightarrow Y$
being (for instance) the result of a world that punishes individuals
in a way that is out of their control. Figure \ref{fig:ex1}(d) shows a
finer-grained model, where the path is mediated by a measure of
whether the person is employed, which is itself caused by two
background factors: one representing whether the person hiring is
prejudiced, and the other the employee's qualifications. In this
world, $A$ is a cause of defaulting, even if mediated by other
variables\footnote{For example, if the function determining employment
  $f_E(A,P,Q) \equiv I_{(Q > 0, P = 0 \text{ or } A \neq a)}$ then an
  individual with sufficient qualifications and prejudiced potential
  employer may have a different counterfactual employment value for $A
  = a$ compared to $A = a'$, and a different chance of default. }. The
counterfactual fairness principle however forbids us from using $Y$:
using the twin network
\footnote{In a nutshell, this is a graph that simultaneously depicts
  ``multiple worlds'' parallel to the factual realizations. In
  this graph, all multiple worlds share the same background variables,
  but with different consequences in the remaining variables depending
  on which counterfactual assignments are provided.} of
\citet{pearl:00}, we see in Figure \ref{fig:ex1}(e) that $Y_a$ and
$Y_{a'}$ need not be identically distributed given the background
variables.

In contrast, any function of variables not descendants of $A$ can be
used a basis for fair decision making. This means that any variable
$\hat Y$ defined by $\hat Y = g(U)$ will be counterfactually fair for
any function $g(\cdot)$. Hence, given a causal model, the functional
defined by the function $g(\cdot)$ minimizing some predictive error
for $Y$ will satisfy the criterion, as proposed in Section
\ref{sec:algorithm}. We are essentially learning a projection of $Y$
into the space of fair decisions, removing historical biases as a
by-product.

Counterfactual fairness also provides an answer to some problems on
the incompatibility of fairness criteria. In particular, consider the
following problem raised independently by different authors (e.g.,
\cite{chouldechova:17, kleinberg:17}), illustrated below for the
binary case: ideally, we would like our predictors to obey both
Equality of Opportunity and the {\it predictive parity} criterion
defined by satisfying
\[
P(Y = 1\ |\ \hat Y = 1, A = 1) = P(Y = 1\ |\ \hat Y = 1, A = 0),
\]
\noindent as well as the corresponding equation for $\hat Y = 0$. It
has been shown that if $Y$ and $A$ are marginally associated (e.g.,
recidivism and race are associated) and $Y$ is not a deterministic
function of $\hat Y$, then the two criteria cannot be
reconciled. Counterfactual fairness throws a light in this scenario,
suggesting that both EO and predictive parity may be insufficient if
$Y$ and $A$ are associated: assuming that $A$ and $Y$ are unconfounded (as
expected for demographic attributes), this is the result of $A$ being
a cause of $Y$. By counterfactual fairness, we should {\it not} want
to use $Y$ as a basis for our decisions, instead aiming at some function
$Y_{\perp_A}$ of variables which are not caused by $A$ but are
predictive of $Y$. $\hat Y$ is defined in such a way that is an
estimate of the ``closest'' $Y_{\perp_A}$ to $Y$ according to some
preferred risk function. {\it This makes the incompatibility between EO and
  predictive parity irrelevant}, as $A$ and $Y_{\perp_A}$ will be
independent by construction given the model assumptions.

\section{Implementing Counterfactual Fairness}
\label{sec:methods}

As discussed in the previous Section, we need to relate $\hat Y$ to
$Y$ if the predictor is to be useful, and  we restrict
$\hat Y$ to be a (parameterized) function of the non-descendants of
$A$ in the causal graph following
Lemma~\ref{lem:nondescend}. We next introduce an algorithm, then
discuss assumptions that
can be used to express counterfactuals.

\subsection{Algorithm}
\label{sec:algorithm}

Let $\hat Y \equiv g_\theta(U, X_{\nsucc A})$ be a predictor
parameterized by $\theta$, such as a logistic regression or a neural
network, and where $X_{\nsucc A} \subseteq X$ are non-descendants of
$A$. Given a loss function $l(\cdot, \cdot)$ such as squared loss or
log-likelihood, and training data $\mathcal D \equiv \{(A^{(i)}, X^{(i)}, Y^{(i)})\}$
for $i = 1, 2, \dots, n$, we define $L(\theta) \equiv \sum_{i =
  1}^n \mathbb E[l(y^{(i)}, g_\theta(U^{(i)}, x^{(i)}_{\nsucc
    A}))\ |\ x^{(i)}, a^{(i)}] / n$ as the empirical loss to be
minimized with respect to $\theta$.  Each expectation is with respect
to random variable $U^{(i)} \sim P_{\mathcal M}(U\ |\ x^{(i)},
a^{(i)})$ where $P_{\mathcal M}(U\ |\ x, a)$ is the conditional
distribution of the background variables as given by a causal model
$\mathcal M$ that is available by assumption. If this expectation
cannot be calculated analytically, Markov chain Monte Carlo (MCMC) can
be used to approximate it as in the following algorithm.
  
\begin{algorithmic}[1]
\Procedure{FairLearning}{$\mathcal D, \mathcal M$}\Comment{Learned parameters $\hat \theta$}  
  \State For each data point $i \in \mathcal D$, sample $m$ MCMC samples
  $U_1^{(i)}, \dots, U_m^{(i)} \sim P_{\mathcal M}(U\ |\ x^{(i)},a^{(i)})$.
  \State Let $\mathcal D'$ be the augmented dataset where each point
  $(a^{(i)}, x^{(i)}, y^{(i)})$ in $\mathcal D$ is replaced with the corresponding $m$ points
  $\{(a^{(i)}, x^{(i)}, y^{(i)}, u_j^{(i)})\}$.
  \State $\hat \theta \leftarrow \mathrm{argmin}_\theta \sum_{i' \in \mathcal D'}
                                   l(y^{(i')}, g_\theta(U^{(i')}, x^{(i')}_{\nsucc A}))$.
\EndProcedure
\end{algorithmic}

At prediction time, we report $\tilde Y \equiv \mathbb E[\hat Y(U^\star,
  x^\star_{\nsucc A})\ |\ x^\star, a^\star]$ for a new data point $(a^\star,
x^\star)$.

\noindent{\bf Deconvolution perspective.} The algorithm can be
understood as a deconvolution approach that, given observables $A \cup
X$, extracts its latent sources and pipelines them into a predictive
model. We advocate that \emph{counterfactual assumptions must underlie
  all approaches that claim to extract the sources of variation of the
  data as ``fair'' latent components}. As an example,
\citet{louizos2015variational} start from the DAG $A \rightarrow X
\leftarrow U$ to extract $P(U\ |\ X, A)$. As $U$ and $A$ are not
independent given $X$ in this representation, a type of penalization
is enforced to create a posterior $P_{fair}(U\ | A, X)$ that is close
to the model posterior $P(U\ |\ A, X)$ while satisfying $P_{fair}(U\ |
A = a, X) \approx P_{fair}(U\ | A = a', X)$. But {\it this is neither
  necessary nor sufficient for counterfactual fairness}. The model for
$X$ given $A$ and $U$ must be justified by a causal mechanism, and
that being the case, $P(U\ |\ A, X)$ requires no postprocessing. As a
matter of fact, model $\mathcal M$ can be learned by penalizing
empirical dependence measures between $U$ and $pa_i$ for a given $V_i$
(e.g. \citet{mooij:09}), but this concerns $\mathcal M$ and not $\hat Y$,
and is motivated by explicit assumptions about structural equations,
as described next.

\subsection{Designing the Input Causal Model}
\label{sec:limit-guide-model}

Model $\mathcal M$ must be provided to algorithm {\sc FairLearning}.
Although this is well understood, it is worthwhile remembering that
causal models always require strong assumptions, even more so when
making counterfactual claims \cite{dawid:00}. Counterfactuals
assumptions such as structural equations are in general unfalsifiable
even if interventional data for all variables is available. This is
because there are infinitely many structural equations compatible with
the same observable distribution \cite{pearl:00}, be it observational
or interventional. Having passed testable implications, the remaining
components of a counterfactual model should be understood as
conjectures formulated according to the best of our knowledge. Such
models should be deemed provisional and prone to modifications if, for
example, new data containing measurement of variables previously
hidden contradict the current model.

We point out that we do not need to specify a fully deterministic
model, and structural equations can be relaxed as conditional
distributions. In particular, the concept of counterfactual fairness
holds under three levels of assumptions of increasing strength:

\noindent {\bf Level 1.}  Build $\hat Y$ using only the observable
non-descendants of $A$.  This only requires partial causal ordering
and no further causal assumptions, but in many problems there will be
few, if any, observables which are not descendants of protected
demographic factors.
  
\noindent {\bf Level 2.} Postulate background latent variables that
act as non-deterministic causes of observable variables, based on
explicit domain knowledge and learning algorithms\footnote{In some
  domains, it is actually common to build a model entirely around
  latent constructs with few or no observable parents nor connections
  among observed variables \citep{bol:89}.}. Information about $X$ is
passed to $\hat Y$ via $P(U\ |\ x, a)$.

\noindent {\bf Level 3.} Postulate a fully deterministic model with
latent variables. For instance, the distribution $P(V_i\ |\ pa_i)$
can be treated as an additive error model, $V_i
\!=\! f_i(pa_i) \!+\! e_i$ \citep{peters:14}. The
error term $e_i$ then becomes an input to $\hat Y$ as calculated from
the observed variables. This maximizes the information extracted by
the fair predictor $\hat Y$.

\subsection{Further Considerations on Designing the Input Causal Model}
\label{sec:pragmatic}

One might ask what we can lose by defining causal fairness measures involving
only non-counterfactual causal quantities, such as enforcing $P(\hat Y =
1\ |\ do(A = a)) = P(\hat Y = 1\ |\ do(A = a'))$ instead of our
counterfactual criterion. The reason is that the above equation is
only a constraint on an average effect. Obeying this criterion
provides no guarantees against, for example, having half of the
individuals being strongly ``negatively'' discriminated and half of
the individuals strongly ``positively'' discriminated.  We advocate
that, for fairness, society should not be satisfied in pursuing only
counterfactually-free guarantees. While one may be willing to claim
posthoc that the equation above masks no balancing effect so that
individuals receive approximately the same distribution of outcomes,
{\it that itself is just a counterfactual claim in disguise.} Our
approach is to make counterfactual assumptions explicit. When
unfairness is judged to follow only some ``pathways'' in the causal
graph (in a sense that can be made formal, see
\cite{kilbertus:17,nabi:17}), nonparametric assumptions about the
independence of counterfactuals may suffice, as discussed by
\cite{nabi:17}. In general, nonparametric assumptions may not provide
identifiable adjustments even in this case, as also discussed in our
Supplementary Material.  If competing models with different untestable
assumptions are available, there are ways of simultaneously enforcing a notion of
approximate counterfactual fairness in all of them, as introduced by
us in \cite{russell:17}. Other alternatives include exploiting
bounds on the contribution of hidden variables \cite{pearl:16,silva:16}.

Another issue is the interpretation of causal claims involving
demographic variables such as race and sex. Our view is that such
constructs are the result of translating complex events into random
variables and, despite some controversy, we consider counterproductive
to claim that e.g. race and sex cannot be causes. An idealized
intervention on some $A$ at a particular time can be seen as a notational
shortcut to express a conjunction of more specific interventions,
which may be individually doable but jointly impossible in practice.
It is the plausibility of complex, even if impossible to practically
manipulate, causal chains from $A$ to $Y$ that allows us to
claim that unfairness is real \cite{glymour:14}. Experiments for
constructs exist, such as randomizing names in job applications to
make them race-blind. They do not contradict the notion of race as a
cause, and can be interpreted as an intervention on a particular
aspect of the construct ``race,'' such as ``race perception'' (e.g. Section 4.4.4 of
\cite{pearl:16}).

\section{Illustration: Law School Success}
\label{sec:experiments}
We illustrate our approach on a practical problem that requires
fairness, the \emph{prediction of success in law school}. A second
problem, \emph{understanding the contribution of race to police
  stops}, is described in the Supplementary Material. Following closely the
usual framework for assessing causal models in the machine learning
literature, the goal of this experiment is to quantify how our
algorithm behaves with finite sample sizes while assuming ground truth compatible
with a synthetic model.

\noindent {\bf Problem definition: Law school success}

The Law School Admission Council
conducted a survey across 163 law
schools in the United States \cite{wightman1998lsac}. 
It contains information on 21,790 law students such as their entrance
exam scores (LSAT), their grade-point average (GPA) collected prior to
law school, and their first year average grade (FYA).

Given this data, a school may wish to predict if an applicant will
have a high FYA.
The school would also like to make sure these
predictions are not biased by an individual's race and sex. However,
the LSAT, GPA, and FYA scores, may be biased due to social factors. 
We compare our framework with two unfair baselines: 1. \textbf{Full}:
the standard technique of using all features, including sensitive
features such as race and sex to make predictions;
2. \textbf{Unaware}: fairness through unawareness, where we do not use
race and sex as features. For comparison, we generate predictors $\hat
Y$ for all models using logistic regression.

\paragraph{Fair prediction.}
As described in Section~\ref{sec:limit-guide-model}, there are three
ways in which we can model a counterfactually fair predictor of
FYA. Level 1 uses any features which are not descendants of race and
sex for prediction. Level 2 models latent `fair' variables which are
parents of observed variables. These variables are independent of both
race and sex. Level 3 models the data using an additive error model,
and uses the independent error terms to make predictions. These models
make increasingly strong assumptions corresponding to increased
predictive power. We split the dataset 80/20 into a train/test set,
preserving label balance, to evaluate the models.

As we believe LSAT, GPA, and FYA are all biased by race and sex, we
cannot use any observed features to construct a counterfactually fair
predictor as described in Level 1. 

In Level 2, we postulate that a latent variable: a student's
\textbf{knowledge} (K), affects GPA, LSAT, and FYA scores. The causal
graph corresponding to this model is shown in
Figure~\ref{figure.law_school}, (\textbf{Level 2}). This is a
short-hand for the distributions:
\[
\begin{array}{cc}
  \mbox{GPA} \sim {\cal N}(b_{G} + w_{G}^K K + w_{G}^R R + w_{G}^S S, \sigma_{G}),&  \hspace{0.2in}
  \mbox{FYA} \sim {\cal N}(w_{F}^K K + w_{F}^R R + w_{F}^S S, 1),\\
  \mbox{LSAT} \sim \textrm{Poisson}(\exp(b_{L} + w_{L}^K K + w_{L}^R R + w_{L}^S S)),& \hspace{0.2in}
  \mbox{K} \sim {\cal N}(0,1)
\end{array}
\]
We perform inference on this model using an observed training set to
estimate the posterior distribution of $K$. We use the probabilistic
programming language Stan \cite{rstan} to learn $K$. We call the
predictor constructed using $K$, \textbf{Fair $K$}.

\begin{figure}[th]
  \hspace{-0.3in}
  \begin{tabular}{p{0.5\columnwidth}p{0.5\columnwidth}}
    \centerline{\includegraphics[width=0.4\columnwidth]{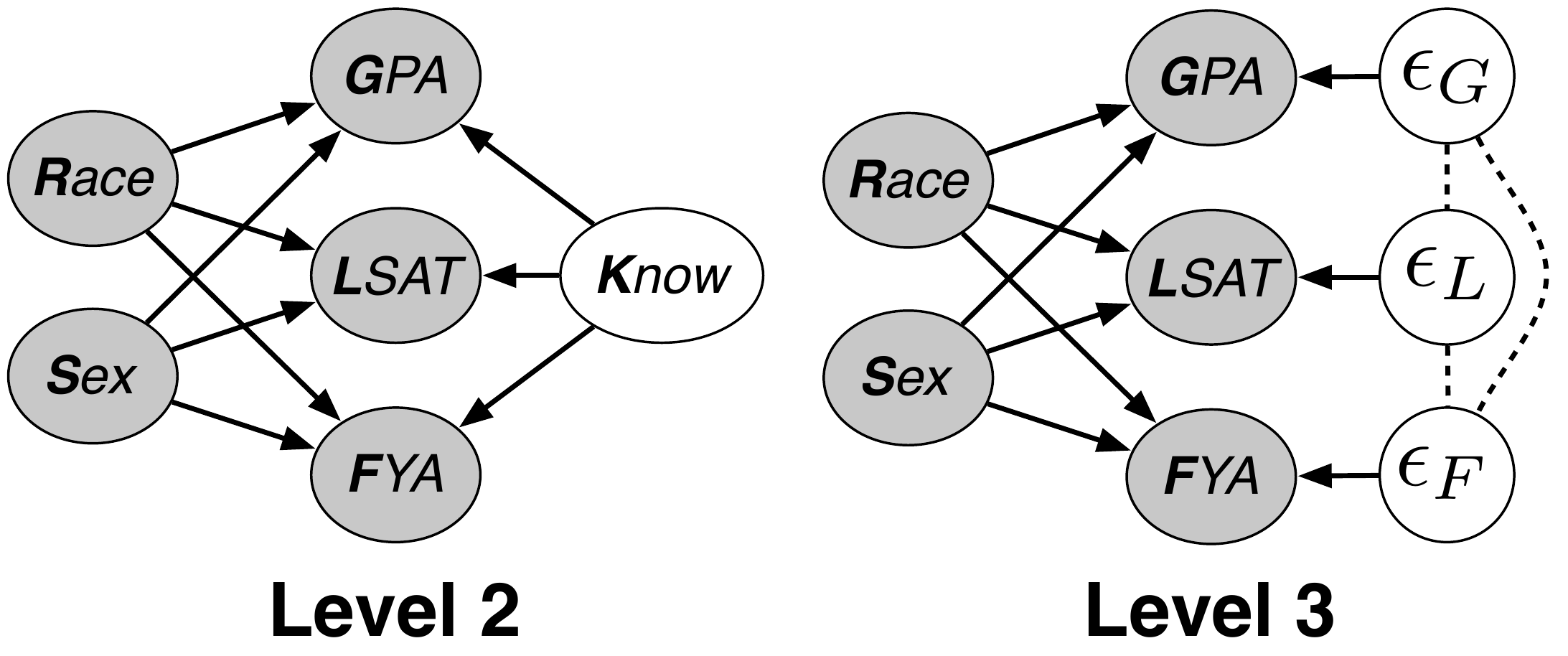}}
    &
      \centerline{\includegraphics[width=0.5\columnwidth]{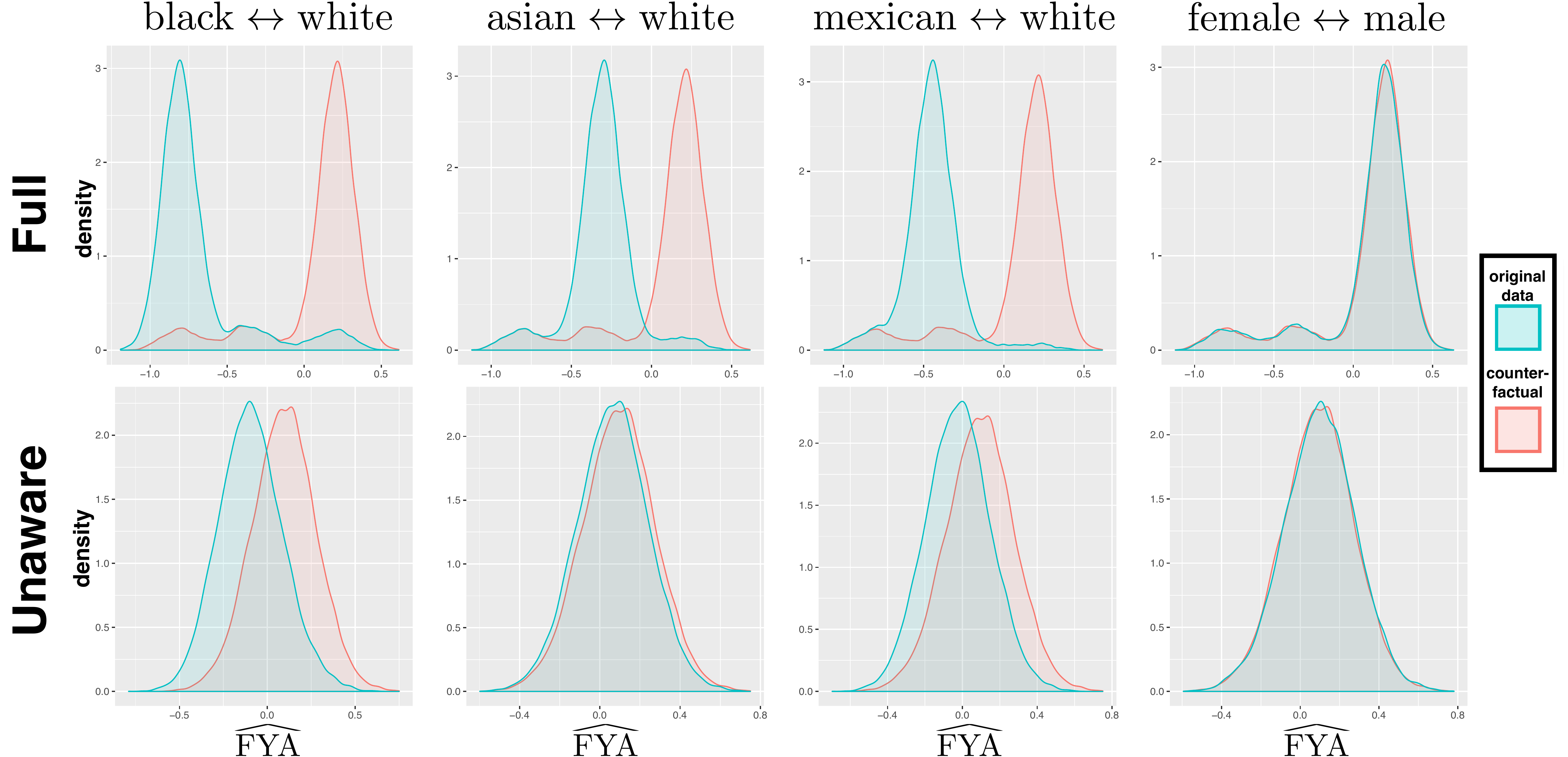}}
  \end{tabular}
  \caption{{\bf Left:} A causal model for the problem of predicting law school success fairly.\label{figure.law_school}
  {\bf Right:} Density plots of predicted $\mbox{FYA}_a$ and $\mbox{FYA}_{a'}$.\label{figure.counterfactual}
}
\end{figure}

\begin{table}
\centering
\caption{Prediction results using logistic regression. Note that we
  must sacrifice a small amount of accuracy to ensuring
  counterfactually fair prediction (Fair $K$, Fair Add), versus the
  models that use unfair features: GPA, LSAT, race, sex (Full,
  Unaware).} \label{table.pred_law}
\begin{tabular}{ccccc} 
\hline
 &  {\bf Full} & {\bf Unaware} & {\bf Fair $K$} & {\bf Fair Add} \\
\hline
RMSE & 0.873 & 0.894 & 0.929 & 0.918 \\
\hline
\end{tabular}
\end{table}

In Level 3, we model GPA, LSAT, and FYA as continuous variables with additive error terms independent of
race and sex (that may in turn be correlated with one-another). This model is shown in
Figure~\ref{figure.law_school}, (\textbf{Level 3}), and is expressed by: 
\begin{align}
\mbox{GPA} &= b_{G} + w_{G}^R R + w_{G}^S S + \epsilon_G, \;\; \epsilon_G \sim p(\epsilon_G) \nonumber \\
\mbox{LSAT} &= b_{L} + w_{L}^R R + w_{L}^S S + \epsilon_L, \;\; \epsilon_L \sim p(\epsilon_L) \nonumber \\
\mbox{FYA} &= b_{F} + w_{F}^R R + w_{F}^S S + \epsilon_F, \;\; \epsilon_F \sim p(\epsilon_F) \nonumber
\end{align}
We estimate the error terms $\epsilon_G,\epsilon_L$ by first fitting
two models that each use race and sex to individually predict GPA and
LSAT. We then compute the residuals of each model (e.g., $\epsilon_G
\!=\! \mbox{GPA} \!-\! \hat{Y}_{\scriptsize\mbox{GPA}}(R,S)$). We use
these residual estimates of $\epsilon_G,\epsilon_L$ to predict FYA. We
call this \emph{Fair Add}.



\paragraph{Accuracy.}
We compare the RMSE achieved by logistic regression for each of the
models on the test set in Table~\ref{table.pred_law}.  The
\textbf{Full} model achieves the lowest RMSE as it uses race and sex
to more accurately reconstruct FYA. Note that in this case, this model
is not fair even if the data was generated by one of the models shown
in Figure~\ref{figure.law_school} as it corresponds to Scenario 3. The
(also unfair) \textbf{Unaware} model still uses the unfair variables
GPA and LSAT, but because it does not use race and sex it cannot match
the RMSE of the \textbf{Full} model. As our models satisfy
counterfactual fairness, they trade off some accuracy. Our first model
\textbf{Fair $K$} uses weaker assumptions and thus the RMSE is
highest. Using the Level 3 assumptions, as in \textbf{Fair Add} we
produce a counterfactually fair model that trades
slightly stronger assumptions for lower RMSE.

\paragraph{Counterfactual fairness.}
We would like to empirically test whether the baseline methods are
counterfactually fair. To do so we will assume the true model of the
world is given by Figure~\ref{figure.law_school}, (\textbf{Level
  2}). We can fit the parameters of this model using the observed data
and evaluate counterfactual fairness by sampling from
it. Specifically, we will generate samples from the model given either
the observed race and sex, or \emph{counterfactual} race and sex
variables. We will fit models to both the original and counterfactual
sampled data and plot how the distribution of predicted FYA changes
for both baseline models. Figure~\ref{figure.counterfactual} shows
this, where each row corresponds to a baseline predictor and each
column corresponds to the counterfactual change. In each plot, the blue
distribution is density of predicted FYA for the original data and the
red distribution is this density for the counterfactual data. If a
model is counterfactually fair we would expect these distributions to
lie exactly on top of each other. Instead, we note that the
\textbf{Full} model exhibits counterfactual unfairness for all
counterfactuals except sex. We see a similar trend for the
\textbf{Unaware} model, although it is closer to being
counterfactually fair. To see why these models seem to be fair
w.r.t. to sex we can look at weights of the DAG which generates the
counterfactual data. Specifically the DAG weights from (male,female)
to GPA are ($0.93$,$1.06$) and from (male,female) to LSAT are
($1.1$,$1.1$). Thus, these models are fair w.r.t. to sex simply
because of a very weak causal link between sex and GPA/LSAT.







\section{Conclusion}
\label{sec:conclusion}
We have presented a new model of fairness we refer to as {\em
  counterfactual fairness}. It allows us to propose algorithms
that, rather than simply ignoring protected attributes, are able to
take into account the different social biases that may arise towards
individuals based on ethically sensitive attributes
and compensate
for these biases effectively. We experimentally contrasted our
approach with previous fairness approaches and show that our explicit
causal models capture these social biases and make clear the implicit
trade-off between prediction accuracy and fairness in an unfair
world. We propose that fairness should be regulated by explicitly
modeling the causal structure of the world. Criteria based purely on
probabilistic independence cannot satisfy this and are unable to
address \emph{how} unfairness is occurring in the task at hand. By
providing such causal tools for addressing fairness questions we hope
we can provide practitioners with customized techniques for solving a
wide array of fairness modeling problems.

\subsubsection*{Acknowledgments}

This work was supported by the Alan Turing Institute under the EPSRC
grant EP/N510129/1. CR acknowledges additional support under the EPSRC Platform Grant EP/P022529/1.
We thank Adrian Weller for insightful feedback, and the anonymous reviewers for helpful comments.

\bibliography{rbas,bibliography}
\bibliographystyle{icml2017}

\newpage


\section*{S1 Population Level vs Individual Level Causal Effects}
\label{sec:individual}

As discussed in Section~\ref{sec:count_fair}, counterfactual fairness
is an individual-level definition. This is fundamentally different
from comparing different units that happen to share the same
“treatment” $A = a$ and coincide on the values of $X$. To see in
detail what this means, consider the following thought experiment.

Let us assess the causal effect of $A$ on $\hat Y$ by controlling
$A$ at two levels, $a$ and $a'$. In Pearl's notation, where ``$do(A = a)$''
expresses an intervention on $A$ at level $a$, we have that
\begin{equation}
\label{eq:ace}
\mathbb{E}[\hat Y\ |\ do(A = a), X = x] - \mathbb{E}[\hat Y\ |\ do(A = a'), X = x],
\end{equation}
is a measure of causal effect, sometimes called the average causal
effect (ACE). It expresses the change that is expected when we
intervene on $A$ while observing the attribute set $X = x$, under two
levels of treatment. If this effect is non-zero, $A$ is considered to
be a cause of $\hat Y$.

This raises a subtlety that needs to be addressed: in general, this
effect will be non-zero {\it even if $\hat Y$ is counterfactually
  fair}. This may sound counter-intuitive: protected attributes such
as race and gender are causes of our counterfactually fair decisions.

In fact, this is not a contradiction, as the ACE in Equation
(\ref{eq:ace}) is different from counterfactual effects. The ACE
contrasts two independent exchangeable units of the population, and it
is a perfectly valid way of performing decision analysis. However, the
value of $X = x$ is affected by different background variables
corresponding to different individuals. That is, the causal effect
(\ref{eq:ace}) contrasts two units that receive different treatments
but which happen to coincide on $X = x$. To give a synthetic example,
imagine the simple structural equation
\[
X = A + U.
\]

The ACE quantifies what happens among people with $U = x -
a$ against people with $U' = x - a'$. If, for instance, $\hat Y =
\lambda U$ for $\lambda \neq 0$, then the effect \eqref{eq:ace}
is $\lambda(a - a') \neq 0$.

Contrary to that, the counterfactual difference is zero.
That is,
\[
\mathbb{E}[\hat Y_{A \leftarrow a}(U)\ |\ A = a, X = x] -
\mathbb{E}[\hat Y_{A \leftarrow a'}(U)\ |\ A = a, X = x] =
\lambda U - \lambda U = 0.
\]

In another perspective, we can interpret the above just as if we had
{\it measured} $U$ from the beginning rather than performing
abduction. We then generate $\hat Y$ from some $g(U)$, so $U$ is the
within-unit cause of $\hat Y$ and not $A$.

If $U$ cannot be deterministically derived from $\{A = a, X = x\}$,
the reasoning is similar. By abduction, the distribution of $U$ will
typically depend on $A$, and hence so will $\hat Y$ when marginalizing
over $U$. Again, this seems to disagree with the intuition that our
predictor should be not be caused by $A$. However, this once again is
a comparison {\it across individuals}, not within an individual. 

It is this balance among $(A, X, U)$ that explains, in the examples
of Section~\ref{sec:further_examples}, why some predictors are
counterfactually fair even though they are functions of the same
variables $\{A, X\}$ used by unfair predictors: such functions must
correspond to particular ways of balancing the observables that, by
way of the causal assumptions, cancel out the effect of $A$.

\noindent {\bf More on conditioning and alternative definitions.} As discussed in
Example 4.4.4 of \citet{pearl:16}, a different proposal for
assessing fairness can be defined via the following concept:
\begin{define}[Probability of sufficiency]
  We define the probability of event $\{ A = a \}$ being a
  \emph{sufficient cause} for our
  decision $\hat Y$, contrasted against $\{ A = a' \}$, as
\begin{align}
  P(\hat Y_{A \leftarrow a'\ }(U) \neq y\ |\ X = x, A = a, \hat Y = y).
  \label{eq:sufficiency}
\end{align}
\end{define}

We can then, for instance, claim that $\hat Y$ is a fair predictor if
this probability is below some pre-specified bound for all $(x, a,
a')$. The shortcomings of this definition come from its original
motivation: to {\it explain} the behavior of an {\it existing}
decision protocol, where $\hat Y$ is the current practice and which in a
unclear way is conflated with $Y$. The implication is that if $\hat Y$
is to be designed instead of being a natural measure of existing
behaviour, then we are using $\hat Y$ itself as evidence for the
background variables $U$. This does not make sense if $\hat Y$ is
yet to be designed by us. If $\hat Y$ is to be interpreted as $Y$, then this
does not provide a clear recipe on how to build $\hat Y$: while we can
use $Y$ to learn a causal model, we cannot use it to collect training
data evidence for $U$ {\it as the outcome $Y$ will not be available to
  us at prediction time}. For this reason, we claim that while
probability of sufficiency is useful as a way of assessing an existing
decision making process, it is not as natural as counterfactual
fairness in the context of machine learning.

\noindent {\bf Approximate fairness and model validation.} The notion
of probability of sufficiency raises the question on how to define
approximate, or high probability, counterfactual fairness. This is an
important question that we address in \citep{russell:17}. Before
defining an approximation, it is important to first expose in detail
what the exact definition is, which is the goal of this paper.

We also do not address the validation of the causal assumptions used
by the input causal model of the {\sc FairLearning} algorithm in
Section \ref{sec:algorithm}. The reason is straightforward: this
validation is an entirely self-contained step of the implementation of
counterfactual fairness. An extensive literature already exists in
this topic which the practitioner can refer to (a classic account for
instance is \cite{bollen:93}), and which can be used as-is in our
context.

The experiments performed in Section \ref{sec:experiments} can be
criticized by the fact that they rely on a model that obeys our
assumptions, and ``obviously'' our approach should work better than
alternatives. This criticism is not warranted: in machine learning,
causal inference is typically assessed through simulations which
assume that the true model lies in the family covered by the
algorithm.  Algorithms, including {\sc FairLearning}, are justified in
the population sense. How different competitors behave with finite
sample sizes is the primary question to be studied in an empirical
study of a new concept, where we control for the correctness of the
assumptions. Although sensitivity analysis is important, there are
many degrees of freedom on how this can be done. Robustness issues are
better addressed by extensions focusing on approximate versions of 
counterfactual fairness. This will be covered in later work.

\noindent {\bf Stricter version.} For completeness of exposition,
notice that the definition of counterfactual fairness could be
strengthened to
\begin{align}
  \label{eq:stricter}
  P(\hat Y_{A \leftarrow a}(U) = \hat Y_{A \leftarrow a'}(U)\ |\ X = x, A = a) = 1.
\end{align}

\noindent This is different from the original definition in the case
where $\hat Y(U)$ is a random variable with a different source of
randomness for different counterfactuals (for instance, if $\hat Y$ is
given by some black-box function of $U$ with added noise that is
independent across each countefactual value of $A$). In such a
situation, the event $\{\hat Y_{A \leftarrow a}(U) = \hat Y_{A
  \leftarrow a'}(U)\}$ will itself have probability zero even if
$P(\hat Y_{A \leftarrow a}(U) = y\ |\ X = x, A = a) = P(\hat Y_{A
  \leftarrow a'}(U) = y\ |\ X = x, A = a)$ for all $y$. We do not
consider version (\ref{eq:stricter}) as in our view it does not feel
as elegant as the original, and it is also unclear whether adding an
independent source of randomness fed to $\hat Y$ would itself be
considered unfair. Moreover, if $\hat Y(U)$ is assumed to be a
deterministic function of $U$ and $X$, as in {\sc FairLearning}, then
the two definitions are the same\footnote{Notice that $\hat Y(U)$ is
  itself a random variable if $U$ is, but the source of randomness,
  $U$, is the same across all counterfactuals.}. Informally, this
stricter definition corresponds to a notion of ``almost surely
equality'' as opposed to ``equality in distribution.'' Without
assuming that $\hat Y$ is a deterministic function of $U$ and $X$,
even the stricter version does not protect us against measure zero events
where the counterfactuals are different. The definition of
counterfactual fairness concisely emphasizes that $U$ can be a random
variable, and clarifies which conditional distribution it follows. Hence, it is our
preferred way of introducing the concept even though it does not
explicit suggests whether $\hat Y(U)$ has random inputs besides $U$.

\section*{S2 Relation to Demographic Parity}
Consider the graph $A \rightarrow X \rightarrow Y$. In general, if
$\hat Y$ is a function of $X$ only, then $\hat Y$ need not obey
demographic parity, i.e.
\begin{align}
  P(\hat Y\ |\ A = a) \neq P(\hat Y\ |\ A = a'),\nonumber
\end{align}
\noindent where, since $\hat Y$ is a function of $X$, the
probabilities are obtained by marginalizing over $P(X\ |\ A = a)$ and
$P(X\ |\ A = a')$, respectively.

If we postulate a structural equation $X = \alpha A + e_X$, then given
$A$ and $X$ we can deduce $e_X$. If $\hat Y$ is a function of $e_X$
only and, by assumption, $e_X$ is marginally independent of $A$, then
$\hat Y$ is marginally independent of $A$: this follows the
interpretation given in the previous section, where we interpret $e_X$
as ``known'' despite being mathematically deduced from the
observation $(A = a, X = x)$. Therefore, the assumptions imply that
$\hat Y$ will satisfy demographic parity, and that can be falsified.
By way of contrast, if $e_X$ is not uniquely identifiable from the
structural equation and $(A, X)$, then the distribution of $\hat Y$
depends on the value of $A$ as we marginalize $e_X$, and demographic
parity will not follow. This leads to the following:
\begin{lem}
If all background variables $U' \subseteq U$ in the definition of
$\hat Y$ are determined from $A$ and $X$,
and all observable variables in the definition of $\hat Y$ are
independent of $A$ given $U'$, then $\hat Y$ satisfies demographic
parity.
\end{lem}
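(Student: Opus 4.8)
The plan is to reduce demographic parity to a single marginal independence statement about the \emph{inputs} of the predictor. Write $\hat Y = g(U', O)$, where $U' \subseteq U$ are the background variables and $O \subseteq X$ the observable variables appearing in the definition of $\hat Y$. Since $\hat Y$ is a (deterministic) function of $(U',O)$, it suffices to show that the joint conditional law $P(U', O \mid A = a)$ does not depend on $a$; pushing that through the map $g$ then gives $P(\hat Y = y \mid A = a) = P(\hat Y = y \mid A = a')$ for all $y$, which is exactly demographic parity.

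First I would factor the joint conditional law of the inputs,
\[
P(U' = u', O = o \mid A = a) = P(O = o \mid U' = u', A = a)\, P(U' = u' \mid A = a).
\]
The second hypothesis, $O \independent A \mid U'$, collapses the first factor to $P(O = o \mid U' = u')$, stripping all dependence on $a$ from it. So the whole argument reduces to the single claim that $P(U' = u' \mid A = a)$ is free of $a$, i.e. $U' \independent A$, after which $P(U' = u', O = o \mid A = a) = P(O = o \mid U' = u')\,P(U' = u')$ is manifestly $a$-free and we are done.

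This residual claim, $U' \independent A$, is the crux and the only real obstacle. It is where the first hypothesis and the causal semantics must do the work. By the model definition in Section~\ref{subsec:cmc} the background variables are exogenous, and in the additive-noise/residual reading of Section~\ref{sec:limit-guide-model} the members of $U'$ are error terms that are independent of their causes, in particular of $A$. The role of ``$U'$ is determined from $A$ and $X$'' is precisely the identifiability that lets us treat each such error term as a \emph{known} quantity distributed according to its $A$-free prior, exactly mirroring the preceding $X = \alpha A + e_X$ example: the deduced value $e_X = x - \alpha a$ serves as evidence, yet its marginal law stays the prior that does not involve $A$.

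I would stress that this step genuinely relies on identifiability rather than being automatic. If $U'$ were not pinned down by $(A, X)$, the posterior $P(U' \mid A, X)$ would in general depend on $A$, and marginalizing over it would reintroduce dependence of $\hat Y$ on $A$, breaking demographic parity; this is the non-identifiable case contrasted in Section~S2. I would therefore be careful to state the exogeneity of the background variables explicitly, since the literal hypotheses do not by themselves rule out a pathological $U'$ that coincides with a background driver of $A$, and it is exogeneity together with identifiability that secures $U' \independent A$.
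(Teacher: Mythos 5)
Your proposal is correct and takes essentially the same approach as the paper: the paper's own justification is the informal paragraph preceding the lemma, which treats the identified $U'$ as a ``known'' quantity carrying its exogenous, $A$-free marginal and contrasts this with the non-identifiable case, and your factorization of $P(U', O \mid A)$ together with the reduction to marginal independence of $U'$ from $A$ formalizes exactly that reasoning. Your explicit remark that exogeneity of the background variables must be invoked (the literal hypotheses alone do not force $U' \perp A$) corresponds to the paper's own implicit appeal to ``by assumption, $e_X$ is marginally independent of $A$.''
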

Thus, counterfactual fairness can be thought of as a counterfactual
analog of demographic parity, as present in the Red Car example further discussed
in the next section.

\section*{S3 Examples Revisited}

In Section \ref{sec:further_examples}, we discussed two examples. We
reintroduce them here briefly, add a third example, and explain some
consequences of their causal structure to the design of
counterfactually fair predictors.

\paragraph{Scenario 1: The Red Car Revisited.}
In that scenario, the structure $A \rightarrow X \leftarrow U
\rightarrow Y$ implies that $\hat Y$ should not use either $X$ or
$A$. On the other hand, it is acceptable to use $U$.  It is
interesting to realize, however, that since $U$ is related to $A$ and
$X$, there will be some association between $Y$ and $\{A, X\}$ as
discussed in Section S1. In particular, if the structural equation for
$X$ is linear, then $U$ is a linear function of $A$ and $X$, and as
such $\hat Y$ will also be a function of both $A$ and $X$. This is not
a problem, as it is still the case that the model implies that this is
merely a functional dependence that disappears by conditioning on a
postulated latent attribute $U$. Surprisingly, we must make $\hat Y$ a
indirect function of $A$ if we want a counterfactually fair predictor,
as shown in the following Lemma.

\begin{lem}
Consider a linear model with the structure in
Figure~\ref{figure.simple_models}(a).  Fitting a linear predictor to
$X$ \emph{only} is not counterfactually fair, while the same algorithm
will produce a fair predictor using \emph{both} $A$ and $X$.
\end{lem}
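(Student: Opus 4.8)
The plan is to work at the population level, where the fitting procedure (least-squares linear regression) is justified, and to make explicit the linear structural equations of the model in Figure~\ref{figure.simple_models}(a). Writing the latent background variable as $U$, the protected attribute $A$, and using that the graph $A \rightarrow X \leftarrow U \rightarrow Y$ has no edge or common cause linking $A$ and $U$ (so $A$ is independent of $U$ and of all noise terms), I would posit $X = w^A_X A + w^U_X U + \epsilon_X$ and $Y = w^U_Y U + \epsilon_Y$. The single tool underlying both claims is a counterfactual computation: abducting from the factual evidence $(A = a, X = x)$ fixes $U$ and $\epsilon_X$, so intervening with $A \leftarrow a'$ produces the counterfactual covariate $X_{A \leftarrow a'} = x + w^A_X(a' - a)$.

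For the first (unfair) claim, I would note that ordinary least squares of $Y$ on $X$ alone returns coefficient $\theta = \mathrm{Cov}(X,Y)/\mathrm{Var}(X) = w^U_X w^U_Y \sigma_U^2 / \mathrm{Var}(X)$, which is nonzero because $X$ and $Y$ are correlated through the common cause $U$. Substituting the counterfactual covariate into $\hat Y = \theta X$ then gives a counterfactual difference $\theta\, w^A_X (a'-a)$, nonzero whenever $w^A_X \neq 0$, so the predictor based on $X$ alone violates Definition~\ref{eq:cf_definition}.

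For the second (fair) claim, write the population regression as $\hat Y = \theta_A A + \theta_X X$; the same substitution gives a counterfactual difference $(a'-a)(\theta_A + \theta_X w^A_X)$, so fairness reduces to the cancellation $\theta_A = -\theta_X w^A_X$. The cleanest route is to reparametrize the regressors from $(A, X)$ to $(A, \tilde X)$ with $\tilde X := X - w^A_X A = w^U_X U + \epsilon_X$, which spans the same subspace. Because $A$ is independent of $U, \epsilon_X, \epsilon_Y$, we have $\mathrm{Cov}(A, \tilde X) = 0$ and $\mathrm{Cov}(A, Y) = 0$, so the projection of $Y$ onto these now-orthogonal regressors puts zero weight on $A$ and weight $\beta = \mathrm{Cov}(\tilde X, Y)/\mathrm{Var}(\tilde X)$ on $\tilde X$. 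The fitted predictor is therefore $\beta \tilde X = \beta(X - w^A_X A)$, which in original coordinates means $\theta_A = -\beta w^A_X = -\theta_X w^A_X$, exactly the required cancellation. Equivalently, $\hat Y = \beta(w^U_X U + \epsilon_X)$ is a function only of the non-descendants $U, \epsilon_X$ of $A$, so Lemma~\ref{lem:nondescend} delivers counterfactual fairness immediately, and one recovers the statement that regressing $Y$ on $(A,X)$ is equivalent to regressing on $U$.

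The main obstacle is establishing the cancellation $\theta_A = -\theta_X w^A_X$ rigorously, i.e.\ justifying the reparametrization so that $\tilde X$ is genuinely uncorrelated with $A$. This rests entirely on the independence of $A$ from the background variables (read off from the graph) together with linearity; without $A \perp U$ the combination $X - w^A_X A$ would retain an $A$-component, the orthogonality would fail, and the fitted coefficients would no longer cancel. Everything else is either a one-line covariance evaluation or a direct appeal to Lemma~\ref{lem:nondescend}, and the pleasing feature is that the structural coefficient $w^A_X$ governing the counterfactual shift is precisely the coefficient whose cancellation the regression enforces.
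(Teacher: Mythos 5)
Your proof is correct, and it reaches the paper's conclusion by a recognizably different computation. The paper works with the fully deterministic equations $X = \alpha A + \beta U$, $Y = \gamma U$ and obtains the two-variable regression coefficients by explicitly inverting the $2\times 2$ covariance matrix, finding $\lambda_X = \gamma/\beta$ and $\lambda_A = -\alpha\gamma/\beta$, and then verifies by direct substitution that $\hat Y(X,A) = \gamma U$, which is unchanged under the action step $A \leftarrow a'$. You instead avoid the matrix inversion with a Frisch--Waugh--Lovell-style orthogonalization: reparametrizing the regressors as $(A, \tilde X)$ with $\tilde X = X - w^A_X A$ uncorrelated with $A$, so the coefficient on $A$ is read off as $\mathrm{Cov}(A,Y)/\mathrm{Var}(A) = 0$ and the cancellation $\theta_A = -\theta_X w^A_X$ follows with no algebra; you then conclude fairness by appeal to Lemma~\ref{lem:nondescend} (the fitted predictor is a function of background variables only), whereas the paper re-runs the counterfactual substitution by hand. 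Your version is also slightly more general, since you allow additive noise terms $\epsilon_X, \epsilon_Y$ while the paper's structural equations are noiseless; the only caveat is that your fair predictor is $\beta(w^U_X U + \epsilon_X)$, a function of $U$ and $\epsilon_X$ jointly, so the main text's claim that the fit is ``equivalent to regressing on $U$'' is recovered exactly only when $\epsilon_X = 0$. The unfairness half of your argument is essentially identical to the paper's. One small presentational point: both proofs require all structural coefficients to be nonzero (the paper states this as a hypothesis), and you invoke it only implicitly when asserting $\theta \neq 0$ and $w^A_X \neq 0$; it should be stated up front.
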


\begin{proof}
As in the definition, we will consider the population case, where the
joint distribution is known. Consider the case where the equations
described by the model in Figure~\ref{figure.simple_models}(a)
are deterministic and linear:
\begin{align}
X = \alpha A + \beta U, \;\;\;\; Y = \gamma U. \nonumber
\end{align}
Denote the variance of $U$ as $v_U$, the variance of $A$ as $v_A$, and
assume all coefficients are non-zero. The predictor $\hat Y(X)$
defined by least-squares regression of $Y$ on \emph{only} $X$ is given
by $\hat Y(X) \equiv \lambda X$, where $\lambda = Cov(X, Y) / Var(X)
\!=\! \beta\gamma v_U / (\alpha^2 v_A + \beta^2 v_U) \neq 0$. This 
predictor follows the concept of fairness through unawareness.

We can test whether a predictor $\hat{Y}$ is counterfactually fair
by using the procedure described in Section~\ref{subsec:cmc}:

{\em (i)} Compute $U$ given observations of $X,Y,A$; 
{\em (ii)} Substitute the equations involving $A$ with an interventional value $a'$; 
{\em (iii)} Compute the variables $X,Y$ with the interventional value
$a'$. It is clear here that $\hat Y_a(U) \!=\! \lambda(\alpha a +
\beta U) \neq \hat Y_{a'}(U)$. This predictor is not counterfactually
fair. Thus, in this case fairness through unawareness actually
perpetuates unfairness.

Consider instead doing least-squares regression of $Y$ on $X$
\emph{and} $A$. Note that $\hat Y(X,A) \equiv \lambda_X X + \lambda_A
A$ where $\lambda_X,\lambda_A$ can be derived as follows:

\begin{align}
\begin{pmatrix}
\lambda_X \\
\lambda_A
\end{pmatrix} &=
\begin{pmatrix}
Var(X) & Cov(A,X) \\
Cov(X,A) & Var(A)
\end{pmatrix}^{-1}
\begin{pmatrix}
Cov(X,Y) \\
Cov(A,Y)
\end{pmatrix} \nonumber \\
&=
\frac{1}{\beta^2 v_U v_A}
\begin{pmatrix}
v_A & -\alpha v_A \\
-\alpha v_A & \alpha^2 v_A + \beta^2 v_U
\end{pmatrix}
\begin{pmatrix}
\beta \gamma v_U \\
0
\end{pmatrix} \nonumber \\
&=
\begin{pmatrix}
\frac{\gamma}{\beta} \\
\frac{-\alpha\gamma}{\beta}
\end{pmatrix}
\end{align}
Now imagine we have observed $A\!=\!a$. This implies that $X = \alpha
a + \beta U$ and our predictor is $\hat Y(X,a) =
\frac{\gamma}{\beta}(\alpha a + \beta U) + \frac{-\alpha\gamma}{\beta}
a = \gamma U$. Thus, if we substitute $a$ with a counterfactual $a'$
(the action step described in Section~\ref{subsec:cmc}) the predictor
$\hat Y(X,A)$ is unchanged. This is because our predictor is
constructed in such a way that any change in $X$ caused by a change in
$A$ is cancelled out by the $\lambda_A$. Thus this predictor is
counterfactually fair.
\end{proof}

Note that if Figure~\ref{figure.simple_models}(a) is the
true model for the real world then $\hat Y(X,A)$ will also satisfy
demographic parity and equality of opportunity as $\hat Y$ will be
unaffected by $A$. 

The above lemma holds in a more general case for the structure given
in Figure~\ref{figure.simple_models}(a): any non-constant
estimator that depends only on $X$ is not counterfactually fair as
changing $A$ always alters $X$.

\paragraph{Scenario 2: High Crime Regions Revisited.}

The causal structure differs from the previous example by the extra
edge $X \rightarrow Y$. For illustration purposes, assume again that
the model is linear. Unlike the previous case, a predictor $\hat Y$
trained using $X$ and $A$ is not counterfactually fair. The only
change from Scenario 1 is that now $Y$ depends on $X$ as follows: $Y
\!=\! \gamma U + \theta X$. Now if we solve for $\lambda_X,\lambda_A$
it can be shown that $\hat Y(X,a) \!=\! (\gamma - \frac{\alpha^2
  \theta v_A}{\beta v_U})U + \alpha \theta a$. As this predictor
depends on the values of $A$ that are not explained by $U$, then
$\hat Y(X,a) \!\neq\! \hat Y(X,a')$ and thus $\hat Y(X,A)$ is not
counterfactually fair.

The following extra example complements the previous two examples.

\paragraph{Scenario 3: University Success.}
A university wants to know if students will be successful
post-graduation $Y$. They have information such as: grade point
average (GPA), advanced placement (AP) exams results, and other
academic features $X$. The university believes however, that an
individual's gender $A$ may influence these features and their
post-graduation success $Y$ due to social discrimination. They also
believe that independently, an individual's latent talent $U$ causes
$X$ and $Y$. The structure is similar to
Figure~\ref{figure.simple_models}(a), with the extra
edge $A \rightarrow Y$. We can again ask, is the predictor $\hat
Y(X,A)$ counterfactually fair? In this case, the different between
this and Scenario 1 is that $Y$ is a function of $U$ and $A$ as
follows: $Y \!=\! \gamma U + \eta A$. We can again solve for
$\lambda_X,\lambda_A$ and show that $\hat Y(X,a) \!=\! (\gamma -
\frac{\alpha \eta v_A}{\beta v_U})U + \eta a$. Again $\hat Y(X,A)$ is
a function of $A$ not explained by $U$, so it cannot be counterfactually fair.


%

\section*{S4 Analysis of Individual Pathways}
\label{sec:pathways}

By way of an example, consider the following adaptation of the
scenario concerning claims of gender bias in UC Berkeley's admission
process in the 1970s, commonly used a textbook example of Simpson's
Paradox. For each candidate student's application, we have $A$ as a
binary indicator of whether the applicant is female, $X$ as the choice
of course to apply for, and $Y$ a binary indicator of whether the
application was successful or not.  Let us postulate the causal graph
that includes the edges $A \rightarrow X$ and $X \rightarrow Y$ only.
We observe that $A$ and $Y$ are negatively associated, which in first
instance might suggest discrimination, as gender is commonly accepted
here as a protected attribute for college admission. However, in the
postulated model it turns out that $A$ and $Y$ are causally independent given
$X$. More specifically, women tend to choose more competitive courses
(those with higher rejection rate) than men when applying.  Our
judgment is that the higher rejection among female than male
applicants is acceptable, if the mechanism $A \rightarrow X$ is
interpreted as a choice which is under the control of the applicant.
That is, free-will overrides whatever possible cultural background conditions
that led to this discrepancy. In the framework of counterfactual
fairness, we could claim that $A$ is not a protected attribute to
begin with once we understand how the world works, and that including
$A$ in the predictor of success is irrelevant anyway once we include
$X$ in the classifier.

However, consider the situation where there is an edge $A \rightarrow
Y$, interpreted purely as the effect of discrimination after causally
controlling for $X$.  While it is now reasonable to postulate $A$ to
be a protected attribute, we can still judge that $X$ is not an unfair
outcome: there is no need to ``deconvolve'' $A$ out of $X$ to obtain
an estimate of the other causes $U_X$ in the $A \rightarrow X$
mechanism. This suggests a simple modification of the definition of
counterfactual fairness. First, given the causal graph $\mathcal G$ assumed to
encode the causal relationships in our system, define $\mathcal P_{\mathcal G_A}$ as
the set of all directed paths from $A$ to $Y$ in $\mathcal G$ which
are postulated to correspond to all unfair chains of events where $A$
causes $Y$. Let $X_{\mathcal P^c_{\mathcal G_A}} \subseteq X$ be the
subset of covariates not present in any path in $\mathcal P_{\mathcal
  G_A}$. Also, for any vector $x$, let $x_s$ represent the
corresponding subvector indexed by $S$. The corresponding uppercase
version $X_S$ is used for random vectors.

\begin{define}[(Path-dependent) counterfactual fairness]
  Predictor $\hat Y$ is {\bf (path-dependent) counterfactually fair}
  with respect to path set $\mathcal P_{\mathcal G_A}$ if under any
  context $X = x$ and $A = a$,
  \label{eq:cf_definition}
\begin{align}
  P(\hat Y_{A \leftarrow a, X_{\mathcal P^c_{\mathcal G_A}} \leftarrow\ x_{\mathcal P^c_{\mathcal G_A}}}(U) = y\ |\ X = x, A = a)  =
  \nonumber\\ 
  P(\hat Y_{A \leftarrow a', X_{\not \mathcal P^c_{\mathcal G_A}} \leftarrow\ x_{ \mathcal P^c_{\mathcal G_A}}}(U) = y\ |\ X = x, A = a), 
\end{align}
for all $y$ and for any value $a'$ attainable by $A$.
\end{define}

This notion is related to {\it controlled direct effects}
\citep{pearl:16}, where we intervene on some paths from $A$ to $Y$,
but not others. Paths in $\mathcal P_{\mathcal G_A}$ are considered
here to be the ``direct'' paths, and we condition on $X$ and $A$
similarly to the definition of probability of sufficiency
(\ref{eq:sufficiency}). This definition is the same as the original
counterfactual fairness definition for the case where $\mathcal
P^c_{\mathcal G_A} = \emptyset$. Its interpretation is analogous to
the original, indicating that for any $X_0 \in X_{\mathcal
  P^c_{\mathcal G_A}}$ we are allowed to propagate information from
the factual assigment $A = a$, along with what we learned about the
background causes $U_{X_0}$, in order to reconstruct $X_0$. The
contribution of $A$ is considered acceptable in this case and does not
need to be ``deconvolved.''  The implication is that any member of
$X_{\not \mathcal P^c_{\mathcal G_A}}$ can be included in the
definition of $\hat Y$. In the example of college applications, we are
allowed to use the choice of course $X$ even though $A$ is a
confounder for $X$ and $Y$. We are still not allowed to use $A$
directly, bypassing the background variables.

As discussed by \cite{nabi:17}, there are some counterfactual
manipulations usable in a causal definition of fairness that can be
performed by exploiting only independence constraints among the
counterfactuals: that is, without requiring the explicit description
of structural equations or other models for latent variables. A
contrast between the two approaches is left for future work, although
we stress that they are in some sense complementary: we are motivated
mostly by problems such as the one in Figure \ref{fig:ex1}(d), where
many of the mediators themselves are considered to be unfairly
affected by the protected attribute, and independence constraints
among counterfactuals alone are less likely to be useful in
identifying constraints for the fitting of a fair predictor.


\section*{S5 The Multifaceted Dynamics of Fairness}
\label{sec:dynamics}

One particularly interesting question was raised by one of the
reviewers: what is the effect of continuing discrimination after fair
decisions are made?  For instance, consider the case where banks
enforce a fair allocation of loans for business owners regardless of,
say, gender. This does not mean such businesses will thrive at a
balanced rate if customers continue to avoid female owned business at
a disproportionate rate for unfair reasons. Is there anything useful
that can be said about this issue from a causal perspective?

The work here proposed regards only what we can influence by changing
how machine learning-aided decision making takes place at specific
problems. It cannot change directly how society as a whole carry on
with their biases. Ironically, it may sound unfair to banks to enforce
the allocation of resources to businesses at a rate that does not
correspond to the probability of their respective success, even if the
owners of the corresponding businesses are not to be blamed by
that. One way of conciliating the different perspectives is by
modeling how a fair allocation of loans, even if it does not come
without a cost, can nevertheless increase the proportion of successful
female businesses compared to the current baseline. This change can by
itself have an indirect effect on the culture and behavior of a
society, leading to diminishing continuing discrimination by a
feedback mechanism, as in affirmative action. We believe that in the
long run isolated acts of fairness are beneficial even if we do not
have direct control on all sources of unfairness in any specific
problem.  Causal modeling can help on creating arguments about the
long run impact of individual contributions as e.g. a type of
macroeconomic assessment. There are many challenges, and we should not
pretend that precise answers can be obtained, but in theory we should
aim at educated quantitative assessments validating how a systemic
improvement in society can emerge from localized ways of addressing
fairness.

\begin{figure}[t]
\begin{center}
\centerline{\includegraphics[width=3in]{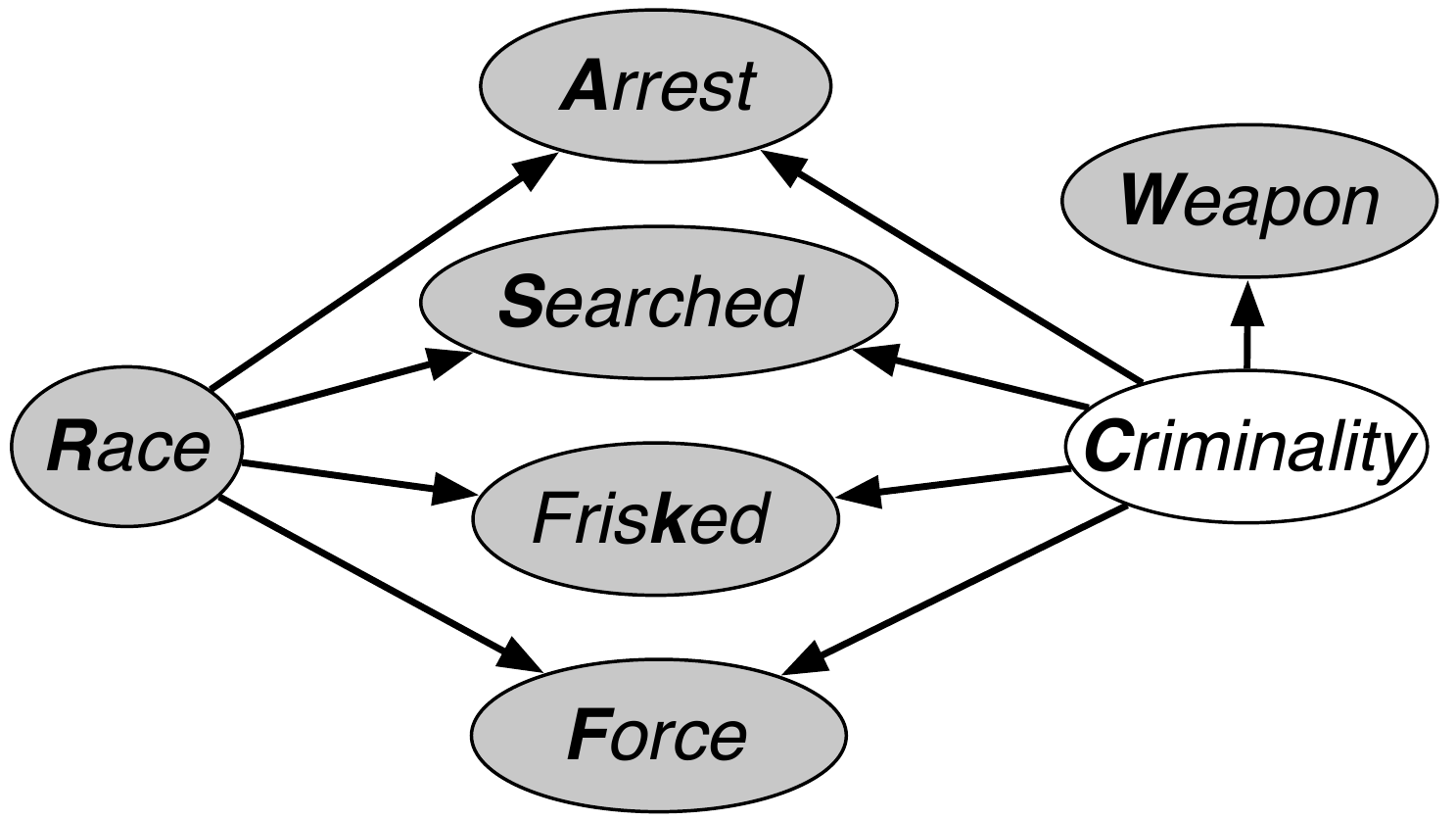}}
\caption{A causal model for the stop and frisk dataset.\label{figure.stop_and_frisk6}}
\end{center}
\end{figure}

\section*{S6 Case Study: NYC Stop-and-Frisk Data}
\label{sec:true-vs.-perceived}

Since 2002, the New York Police Department (NYPD) has recorded
information about every time a police officer has stopped someone. The
officer records information such as if the person was searched or
frisked, if a weapon was found, their appearance, whether an arrest
was made or a summons issued,  if force was used, etc. 
We consider the data collected on males stopped during 2014 which constitutes
38,609 records. We limit our analysis to looking at just males
stopped as this accounts for more than $90\%$ of the data.  We fit a
model which postulates that police interactions is caused by race and a 
single latent factor labeled \emph{Criminality} that is meant to index other aspects
of the individual that have been used by the police and which are independent of race.
We do not claim that this model has a solid theoretical basis, we use it below as an illustration
on how to carry on an analysis of  counterfactually fair decisions. We also describe a spatial analysis of
the estimated latent factors.


\paragraph{Model.}
We model this stop-and-frisk data using the graph in
Figure~\ref{figure.stop_and_frisk6}. Specifically, we posit main causes
for the observations: \emph{Arrest} (if an individual was arrested),
\emph{Force} (some sort of force was used during the stop),
\emph{Frisked}, and \emph{Searched}. The first cause of these
observations is some measure of an individual's latent
\emph{Criminality}, which we do not observe. We believe that \emph{Criminality} also directly affects \emph{Weapon} (an individual was found to be carrying a weapon). For all of the features previously mentioned we believe there is an additional cause, an individual's \emph{Race} which we do observe. This factor is introduced
as we believe that these observations may be biased based on an
officer's perception of whether an individual is likely a criminal or
not, affected by an individual's \emph{Race}. Thus note that, in this model, \emph{Criminality} is counterfactually fair for the prediction of any characteristic
of the individual for problems where \emph{Race} is a protected attribute.


\begin{figure*}[!t]
\begin{center}
\centerline{\includegraphics[width=\textwidth]{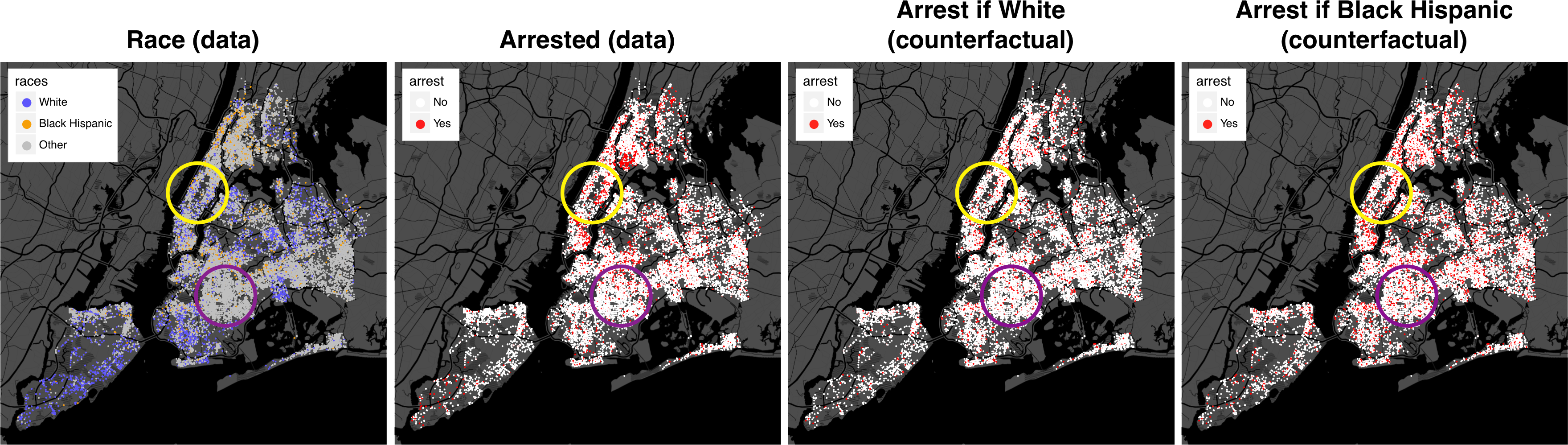}}
\caption{How race affects arrest. The above maps show how altering one's race affects whether or not they will be arrested, according to the model. The left-most plot shows the distribution of White and Black Hispanic populations in the stop-and-frisk dataset. The second plot shows the true arrests for all of the stops. Given our model we can compute whether or not every individual in the dataset would be arrest \emph{had they been white}. We show this counterfactual in the third plot. Similarly, we can compute this counterfactual if everyone had been Black Hispanic, as shown in the fourth plot.}
\label{figure.criminality2}
\end{center}
\end{figure*}

\paragraph{Visualization on a map of New York City.}
Each of the stops can be mapped to longitude and latitude points for
where the stop
occurred\footnote{https://github.com/stablemarkets/StopAndFrisk}. This allows us to visualize the distribution of two distinct populations: the stops of White and Black Hispanic individuals, shown in Figure~\ref{figure.criminality2}. We note that there are more White individuals stopped ($4492$) than Black Hispanic individuals ($2414$). However, if we look at the arrest distribution (visualized geographically in the second plot) the rate of arrest for White individuals is lower ($12.1\%$) than for Black Hispanic individuals ($19.8\%$, the highest rate for any race in the dataset). Given our model we can ask: ``If every individual had been White, would they have been arrested?''. The answer to this is in the third plot. We see that the overall number of arrests decreases (from $5659$ to $3722$). What if every individual had been Black Hispanic? The fourth plot shows an increase in the number of arrests had individuals been Black Hispanic, according to the model (from $5659$ to $6439$). The yellow and purple circles show two regions where the difference in counterfactual arrest rates is particularly striking. Thus, the model indicates that, even when everything else in the model is held constant, race has a differential affect on arrest rate under the (strong) assumptions of the model.


\end{document}